\newtheorem{theorem}{Theorem}[section]
\newtheorem{lemma}[theorem]{Lemma}
\newtheorem{proposition}{Proposition}[section]
\DeclareMathOperator*{\argmax}{arg\,max}
\DeclareMathOperator*{\argmin}{arg\,min}
\definecolor{light-blue}{rgb}{.27, .37, .6}
\definecolor{light-brown}{rgb}{.8, .4, .3}
\newcolumntype{M}[1]{>{\centering\arraybackslash}m{#1}}
\newcolumntype{N}{@{}m{0pt}@{}}
\icmltitlerunning{Generative Adversarial Imitation from Observation}
\begin{document}

\twocolumn[
\icmltitle{Generative Adversarial Imitation from Observation}



\icmlsetsymbol{equal}{*}

\begin{icmlauthorlist}
	\icmlauthor{Faraz Torabi}{ut}
	\icmlauthor{Garrett Warnell}{army}
	\icmlauthor{Peter Stone}{ut}
\end{icmlauthorlist}

\icmlaffiliation{ut}{University of Texas at Austin, Austin, USA}
\icmlaffiliation{army}{U.S. Army Research Laboratory}

\icmlcorrespondingauthor{Faraz Torabi}{faraztrb@cs.utexas.edu}

\icmlkeywords{Machine Learning, ICML}

\vskip 0.3in
]



\printAffiliationsAndNotice{\icmlEqualContribution} 

\begin{abstract}
{\em Imitation from observation} (\mbox{\emph{IfO}}) is the problem of learning directly from state-only demonstrations without having access to the demonstrator's actions.
The lack of action information both distinguishes \mbox{\emph{IfO}} from most of the literature in imitation learning, and also sets it apart as a method that may enable agents to learn from a large set of previously inapplicable resources such as internet videos. 
In this paper, we propose both a general framework for \mbox{\emph{IfO}} approaches and also a new \mbox{\emph{IfO}} approach based on generative adversarial networks called \textit{generative adversarial imitation from observation} (\mbox{\emph{GAIfO}}). We conduct experiments in two different settings: \emph{(1)} when demonstrations consist of low-dimensional, manually-defined state features, and \emph{(2)} when demonstrations consist of high-dimensional, raw visual data. We demonstrate that our approach performs comparably to classical imitation learning approaches (which have access to the demonstrator's actions) and significantly outperforms existing imitation from observation methods in high-dimensional simulation environments.
\end{abstract}

\section{Introduction}
One well-known way in which artificially-intelligent agents are able to learn to perform tasks is via {\em reinforcement learning (\mbox{RL})} \cite{sutton1998reinforcement} techniques. Using these techniques, if agents are able to interact with the world and receive feedback (known as {\em reward}) based on how well they are performing with respect to a particular task, they are able to use their own experience to improve their future behavior. However, designing a proper feedback mechanism for complex tasks can sometimes prove to be extremely difficult for system designers. Moreover, learning based solely on one's own experience can be exceedingly slow.

Concerns such as the ones above have given rise to the study of {\em imitation learning} \cite{schaal1997learning,billard2008robot,argall2009survey}, where agents instead attempt to learn a task by observing another, more expert agent perform that task. Because the information about how to perform the task is communicated to the imitating agent via a demonstration, this paradigm does not require the explicit design of a reward function. Moreover, because the demonstrations directly provide rich information regarding how to perform the task correctly, imitation learning is typically faster than \mbox{\emph{RL}}. While there are multiple ways that this problem can be formulated, one general approach is referred to as \textit{inverse reinforcement learning (\mbox{IRL})} \cite{russell1998learning}. \mbox{\emph{IRL}}-based techniques aim to first infer the expert agent's reward function, and then learn imitating behavior using \mbox{\emph{RL}} techniques that utilize the inferred function.

Importantly, most of the imitation learning literature has thus far concentrated only on situations in which the imitator not only has the ability to observe the demonstrating agent's {\em states} (e.g., observable quantities such as spatial location), but also the ability to observe the demonstrator's {\em actions} (e.g., internal control signals such as motor commands). While this extra information can make the imitation learning problem easier, requiring it is also limiting. In particular, requiring action observations makes a large number of valuable learning resources -- e.g., vast quantities of online videos of people performing different tasks \cite{zhou2017procnets} -- useless. For the demonstrations present in such resources, the actions of the expert are unknown. This limitation has recently motivated work in the area of \textit{imitation from observation (\mbox{IfO})} \cite{liu2017imitation}, in which agents seek to perform imitation learning using state-only demonstrations. 

Broadly speaking, the \mbox{\emph{IfO}} problem consists of two major subproblems: (1) perception of the demonstrations, i.e., extracting useful features from raw visual data, and (2) learning a control policy using the extracted features. Most \mbox{\emph{IfO}} work thus far \cite{liu2017imitation,sermanet2017time} has focused on perception and not on control. While powerful methods for perceiving the demonstrations have been developed, the control problem is solved via relatively simple means, i.e., reinforcement learning over a pre-defined reward function. Depending on the defined reward function, this approach could be restrictive, as discussed further in the next section. Therefore, we seek a more sophisticated control algorithm that is able to learn the task automatically from the demonstrations without explicitly defining a reward function.

In this paper, we propose a general framework for the control aspect of \mbox{\emph{IfO}} in which we characterize the cost as a function of state transitions only. Under this framework, the \mbox{\emph{IfO}} problem becomes one of trying to recover the state-transition cost function of the expert. Inspired by the work of \citeauthor{ho2016generative} (\citeyear{ho2016generative}), we introduce a novel, model-free algorithm called {\em generative adversarial imitation from observation} (\mbox{\emph{GAIfO}}) and prove that it is a specific version of the general framework proposed for \mbox{\emph{IfO}}. We then experimentally evaluate \mbox{\emph{GAIfO}} in high-dimensional simulation environments in two different settings: (1) demonstrations and states of the imitator are manually-defined features, and (2) demonstrations and states of the imitator come exclusively from raw visual observation. We show that the proposed method compares favorably to other recently-developed methods for \mbox{\emph{IfO}} and also that it performs comparably to state-of-the-art conventional imitation learning methods that {\em do} have access the the demonstrator's actions.

The rest of this paper is organized as follows. In Section \ref{sec:related-work}, we cover related work in imitation learning and review existing research in imitation from observation. Then, we present the notation and background needed in Section \ref{preliminaries}. In Section \ref{sec:general}, we introduce our proposed general framework for \mbox{\emph{IfO}} problems and, in Sections \ref{sec:gaifo} and \ref{sec:gaifo:impl}, we discuss our \mbox{\emph{IfO}} algorithm, \mbox{\emph{GAIfO}}. Finally, we describe and discuss our experiments in Sections \ref{sec:experiments} and \ref{sec:results}, respectively.

\section{Related Work}\label{sec:related-work}
Because our work is related to imitation learning \cite{schaal2003computational}, we first discuss here different approaches and recent advancements in this area. In general, existing work in imitation learning can be split into two categories: {\em (1)} behavioral cloning (\mbox{\emph{BC}}) \cite{bain1995a,pomerleau1989alvinn}, and {\em (2)} inverse reinforcement learning (\mbox{\emph{IRL}}) \cite{ng2000algorithms,abbeel2004apprenticeship,ziebart2008maximum,fu2017learning}.

Behavioral cloning methods use supervised learning as a means by which to find a direct mapping from states to actions. \mbox{\emph{BC}} approaches have been used to successfully learn many different tasks such as navigation for quadrotors \cite{giusti2016machine} or autonomous ground vehicles \cite{bojarski2016end}. Inverse reinforcement learning (\mbox{\emph{IRL}}) techniques, on the other hand, seek to learn the demonstrator's cost function and then use this learned cost function in order to learn an imitation policy through \mbox{\emph{RL}} techniques. \mbox{\emph{IRL}} methods have been used for interesting tasks such as dish placement and pouring \cite{finn2016guided}. To the best of our knowledge, the current state of the art in imitation learning is an \mbox{\emph{IRL}}-based technique called generative adversarial imitation learning (\mbox{\emph{GAIL}}) \cite{ho2016generative}. \mbox{\emph{GAIL}} uses generative adversarial networks (\mbox{\emph{GANs}}) \cite{goodfellow2014generative} as a means by which to bring the distribution of state and action pairs of the imitator and the demonstrator closer together. 

Most existing imitation learning approaches require demonstrations that include the expert actions. However, these actions are not always observable, and sometimes it is more practical to be able to imitate state-only demonstrations. One step towards this goal is the work of \citeauthor{finn2017one} (\citeyear{finn2017one}) where a meta-learning imitation learning method is proposed that enables a robot to reuse past experience and learn new skills from a single demonstration. In particular, raw pixel videos are used as the source of demonstration information. However, it is still assumed that the expert actions are available during meta-training; the requirement for actions is only lifted at test time when learning the new task.

One way to approach the aforementioned problem is to ``learn to imitate" (as opposed to imitation learning), i.e., by doing some preprocessing, enable the agent to follow a single demonstration exactly. Two such approaches are proposed by \citeauthor{nair2017combining} (\citeyear{nair2017combining}) and \citeauthor{pathak2018zero} (\citeyear{pathak2018zero}). These methods first learn an inverse dynamics model through self-supervised exploration, and then use it to infer the demonstrator's action at each step and perform that in the environment. These approaches mimic the one demonstration that they are exposed to exactly (as opposed to learning and generalizing a task from multiple different demonstrations).

A second approach to imitation from action-free demonstrations is \textit{behavioral cloning from observation (\mbox{BCO})} \cite{torabi2018behavioral}. This method also learns an inverse dynamics model through self-supervised exploration which is then used to infer actions from demonstrations. The problem is then treated as a regular imitation learning problem, and behavioral cloning is used to learn an imitation policy that maps states to the inferred actions. Therefore, this method is able to learn and generalize from different demonstrations but, since it is based on behavioral cloning, it may suffer from the well-studied compounding error caused by covariate shift \cite{ross2010efficient,ross2011reduction,laskey2016shiv}.

A third class of techniques that is able to perform imitation learning without requiring knowledge of actions includes those that first focus on learning a representation of the task and then use an \mbox{\emph{RL}} method with a predefined surrogate reward over that representation. For example, \citeauthor{gupta2017learning} (\citeyear{gupta2017learning}) have proposed an invariant feature space to transfer skills between agents with different embodiments, \citeauthor{liu2017imitation} (\citeyear{liu2017imitation}) have presented a network architecture which is capable of handling differences in viewpoints and contexts between the imitator and the demonstrator, and \citeauthor{sermanet2017time} (\citeyear{sermanet2017time}) have proposed a time-contrastive network which is invariant to both different embodiments and viewpoints. While these techniques represent significant advances in representation learning, each of them uses the same surrogate reward function, i.e., the proximity of the imitator's and demonstrator's encoded representation at each time step. One of the downsides of this reward function is that each provided demonstration needs to be time-aligned, i.e., at every time step, each demonstration needs to have advanced to the same point of the task. 
Another approaches developed by \citeauthor{merel2017learning} and \citeauthor{henderson2018optiongan} aim to imitate the state distribution of the expert. However, the state distribution does not represent the demonstrator policy and the learned policy may fail in tasks such as the cyclic ones. Moreover, these approaches have thus far focused mostly on experimentation and less on the theoretical underpinnings of the control problem.
In our work, we propose a new algorithm to remove the constraints mentioned above, and also provide theoretical analysis of this approach.

\section{Preliminaries}\label{preliminaries}

\paragraph{Notation} We consider agents within the framework of Markov decision processes (MDPs). In this framework, $\mathcal{S}$ and $\mathcal{A}$ are the state and action spaces, respectively. An agent at a particular state $s \in \mathcal{S}$, chooses an action $a \in \mathcal{A}$, based on a policy $\pi:\mathcal{S}\times\mathcal{A}\rightarrow[0,1]$ and transitions to state $s'$ with probability of $P(s'|s,a)$ that is predefined by the environment transition dynamics. In this process, the agent gets feedback $c(s,a)$ which is coming from a cost function $c:\mathcal{S}\times\mathcal{A}\rightarrow\mathbb{R}$. In this paper, $\overline{\mathbb{R}}$ means the extended real numbers $\mathbb{R}\cup \{+\infty\}$ and expectation over a policy means the expectation over all the trajectories that it generates.

\paragraph{Inverse Reinforcement Learning (\mbox{\emph{IRL}})}

As described earlier,  one general approach to imitation learning is based on IRL. The first step of this approach is to learn a cost function based on the given state-action demonstrations. This cost function is learned such that it is minimal for the trajectories demonstrated by the expert and maximal for every other policy \cite{abbeel2004apprenticeship}. However, since the problem is underconstrained --- many policies can lead to the same (demonstrated) trajectories --- another constraint is usually assigned as well which chooses the policy that has the maximum entropy. This method is called maximum entropy inverse reinforcement learning (\mbox{\emph{MaxEnt IRL}}) \cite{ziebart2008maximum}. A very general form of this framework can be described as
\begin{equation}\label{irl}
\begin{split}
\mbox{\emph{IRL}}_\psi(\pi_E) = &\displaystyle{\argmax_{c \in \mathbb{R}^{\mathcal{S} \times \mathcal{A}}}} -\psi(c) +  (\displaystyle{\min_{\pi \in \prod}}-\lambda_H H({\pi}) + \\ &\mathbb{E}_\pi[c(s,a)])-\mathbb{E}_{\pi_E}[c(s,a)]\;,
\end{split}
\end{equation}

where $\psi(c):\mathbb{R}^{\mathcal{S}\times\mathcal{A}}\rightarrow\overline{\mathbb{R}}$ is a convex cost function regularizer, $\pi_E$ is the expert policy, $\mathrm{\Pi}$ is the space of all the possible policies, and $H(\pi)$ and $\lambda_H$ are the entropy function of the policy $\pi$ and its weighting parameter respectively. The output here is the desired cost function. The second step of this framework is to input the learned cost function into a standard reinforcement learning problem. An entropy-regularized version of \mbox{\emph{RL}} can be described as
\begin{align}
\mbox{\emph{RL}}(c) = \displaystyle{\argmin_{\pi \in \prod}} - \lambda_H H(\pi) + \mathbb{E}_\pi[c(s,a)]\;,
\end{align}
which aims to find a policy that minimizes the cost function and maximizes the entropy.

\paragraph{Generative Adversarial Imitation Learning (\mbox{\emph{GAIL}})} Recently, \citeauthor{ho2016generative} have shown that by considering a specific function as the cost regularizer $\psi(c)$, the described pipeline ((1) and (2)) can be solved instead as
\begin{equation}\label{gail}
\begin{split}
\min_{\pi \in \prod} \displaystyle{\max_{D \in (0,1)^{\mathcal{S} \times \mathcal{A}}}} & -\lambda_H H(\pi) + \mathbb{E}_\pi[\log(D(s,a)] +\\ &\mathbb{E}_{\pi_E}[\log(1-D(s,a))]\;,
\end{split}
\end{equation}
where $D:\mathcal{S} \times \mathcal{A} \rightarrow (0,1)$ is a classifier trained to discriminate between the state-action pairs that arise from the demonstrator and the imitator. Excluding the entropy term, the loss function in (\ref{gail}) is similar to the loss of generative adversarial networks \cite{goodfellow2014generative}. Instead of first learning the cost function and then learning the policy on top of that, this method directly learns the optimal policy by bringing the distribution of the state-action pairs of the imitator as close as possible to that of the demonstrator.

\section{A General Framework for Imitation from Observation}\label{sec:general}

In \mbox{\emph{IRL}}, both states and actions are available and the goal is to find a cost function that on average has a smaller value for the trajectories generated by the expert policy compared to the ones generated by any other policy. In the case of imitation from observation, however, the demonstrations that the agent receives are limited to the expert's state-only trajectories. In the context of the \mbox{\emph{IRL}}-based approaches to imitation learning discussed above, this lack of action information makes it impossible to calculate the $\mathbb{E}_{\pi_E}[c(s,a)]$ term in (\ref{irl}). Consequently, none of the approaches described in Section \ref{preliminaries} is directly applicable in this setting.

In imitation from observation, the goal is for the imitator to perform similarly to the expert in the environment, i.e., for the actions of the demonstrator and imitator to have the same effect on the environment (performing the task), rather than taking exactly the same actions. Therefore, instead of characterizing the cost signal as a function of states and actions $c: \mathcal{S} \times \mathcal{A} \rightarrow \mathbb{R}$, we define them as a function of the state transitions $c: \mathcal{S} \times \mathcal{S} \rightarrow \mathbb{R}$. Based on this characterization, we formulate inverse reinforcement learning from observation as
\begin{equation}\label{irlfo}
\begin{split}
\mbox{\emph{IRLfO}}_\psi(\pi_E) = &\displaystyle{\argmax_{c \in \mathbb{R}^{\mathcal{S} \times \mathcal{S}}}} -\psi(c) + (\displaystyle{\min_{\pi \in \prod}} \ \mathbb{E}_\pi[c(s,s')])\\
&-\mathbb{E}_{\pi_E}[c(s,s')])\;,
\end{split}
\end{equation}
which outputs $\tilde{c}$. Note that in (\ref{irlfo}) we ignore the entropy term so as to simplify the theoretical analysis presented in Section \ref{sec:gaifo}. Evidence form \citeauthor{ho2016generative} suggests that doing so is fine from an empirical perspective (they set $\lambda_H = 0$ in more than $80\%$ of their successful experiments). We leave detailed analysis of the effect of this term to future work. From a high-level perspective, in imitation from observation, the goal is to enable the agent to extract what the task is by observing some state sequences. Intuitively, this extraction is possible because we expect the beneficial state transitions for any given task to form a low-dimensional manifold within the $\mathcal{S} \times \mathcal{S}$ space.  Thus, the intuition behind our definition of the cost function is to penalize based on how close each transition is to that manifold.

Now using an \mbox{\emph{RL}} algorithm for $\tilde{c}$ amounts to solving:
\begin{align}\label{modified-rl}
 \mbox{\emph{RL}}(\tilde{c}) = \displaystyle{\argmin_{\pi \in \prod}} \ \mathbb{E}_\pi[\tilde{c}(s,s')]
\end{align}
where the output, $\tilde{\pi}$, is the imitation policy. 

\section{Generative Adversarial Imitation from Observation}\label{sec:gaifo}

Having developed the general framework in (\ref{irlfo}), we now propose a specific algorithm, generative adversarial imitation from observation (\mbox{\emph{GAIfO}}). To this end, we first define the state-transition occupancy measure, $\rho_\pi^{s}:\mathcal{S} \times \mathcal{S} \rightarrow \mathbb{R}$ as
\begin{equation}\label{occupancy-s}
\rho^{s}_\pi(s_i,s_j) = \sum_a P(s_j|s_i,a) \pi(a|s_i) \sum_{t=0}^{\infty} \gamma^t P(s_t=s_i|\pi)\;.
\end{equation} 
This occupancy measure corresponds to the distribution of state transitions that an agent encounters when using policy $\pi$. We define the set of valid state-transition occupancy measures as $\mathrm{P}^{s} \triangleq \{\rho^{s}_\pi:\pi \in \mathrm{\Pi}\}$.

We now introduce a proposition which is the foundation of our algorithm. In the following proposition we use the convex conjugate concept which is defined as follows: for a function $f:X \rightarrow \overline{\mathbb{R}}$, the convex conjugate $f^*:X^* \rightarrow \bar{\mathbb{R}}$ is defined as $f^*(x^*) \triangleq \sup_{x \in X} \langle x^*,x \rangle -f(x)$.

\begin{proposition}\label{main} 
$\mbox{RL} \circ \mbox{IRLfO}_\psi (\pi_E)$ and $\argmin_{\pi \in \mathrm{\Pi}} \psi^*(\rho_\pi^{s} - \rho^{s}_{\pi_E})$ induce policies that have the same state-transition occupancy measure, $\rho^s_{\tilde{\pi}}$.
\end{proposition}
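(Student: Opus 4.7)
The plan is to mirror the derivation of Ho and Ermon (2016) for \mbox{\emph{GAIL}}, replacing state-action occupancies with the state-transition occupancy $\rho^s_\pi$ from (\ref{occupancy-s}). The first ingredient is the linearization identity
\[
\mathbb{E}_\pi[c(s,s')] \;=\; \sum_{s,s'} \rho^s_\pi(s,s')\,c(s,s') \;=\; \langle \rho^s_\pi,\,c\rangle,
\]
obtained by expanding the trajectory expectation, grouping the discounted contributions of each consecutive-state pair, and matching against the definition of $\rho^s_\pi$. Setting
\[
L(\pi,c) \;\triangleq\; -\psi(c) + \langle \rho^s_\pi - \rho^s_{\pi_E},\,c\rangle,
\]
one sees immediately that $\mbox{IRLfO}_\psi(\pi_E) = \argmax_c \min_\pi L(\pi,c)$, and that for any $c^\star$ the rule $\mbox{RL}(c^\star) = \argmin_\pi \langle \rho^s_\pi, c^\star\rangle$ coincides with $\argmin_\pi L(\pi,c^\star)$, because the remaining terms of $L$ do not depend on $\pi$. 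Both objects appearing in the proposition can therefore be written in terms of this single bilinear-plus-concave function $L$.

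Second, I would lift the problem from $\Pi$ to the set $P^s$ of valid state-transition occupancies. Convexity of $P^s$ is the key prerequisite. I would establish it by noting that the set of valid state-action occupancies is convex (a standard LP fact for MDPs) and that the marginalization map $\rho(s,a) \mapsto \sum_{a} P(s'\mid s,a)\,\rho(s,a)$ is linear, so $P^s$ is the image of a convex set under a linear map. Unlike the state-action case of \mbox{\emph{GAIL}}, the correspondence $\pi \mapsto \rho^s_\pi$ is generally not injective, but surjectivity onto $P^s$ plus convexity of $P^s$ is all the argument needs. On this lifted problem, $L$ is affine in $\rho \in P^s$ and concave in $c$.

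Third, apply a minimax theorem (Sion's, once $\psi$ is closed convex with enough regularity that one side is compact or $\psi$ is coercive) to obtain
\[
\min_{\rho \in P^s}\max_c L(\rho,c) \;=\; \max_c \min_{\rho \in P^s} L(\rho,c),
\]
together with a saddle point $(\rho^\star,c^\star)$. By construction $c^\star = \mbox{IRLfO}_\psi(\pi_E)$, and the lifted form of $\mbox{RL}$ gives $\rho^\star = \rho^s_{\tilde\pi}$ for $\tilde\pi = \mbox{RL}(c^\star)$. On the dual side, $\max_c L(\rho,c) = \psi^*(\rho - \rho^s_{\pi_E})$ by the very definition of the convex conjugate, so the same $\rho^\star$ also achieves $\min_{\pi\in\Pi}\psi^*(\rho^s_\pi - \rho^s_{\pi_E})$, which is exactly the right-hand side of the proposition. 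The principal obstacle I anticipate is discharging this minimax swap cleanly for a generic convex regularizer $\psi$; I would either import hypotheses on $\psi$ analogous to those used by Ho and Ermon, or verify the swap directly for the specific $\psi$ used to instantiate \mbox{\emph{GAIfO}} in Section \ref{sec:gaifo:impl}. A subtlety worth flagging but not belaboring is that multiple policies may share a given $\rho^s$, so the two procedures need not return identical policies; phrasing the conclusion in terms of state-transition occupancies is precisely what sidesteps this ambiguity.
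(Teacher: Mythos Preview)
Your proposal is correct and follows essentially the same route as the paper: lift from $\Pi$ to the state-transition occupancy set $P^s$ via the linearization identity, apply a minimax swap (the paper packages this into three lemmas---a surjective lift, a duality lemma proved via the minimax principle in the appendix, and a surjective unlift), and identify the inner maximum over $c$ as the convex conjugate $\psi^*(\rho^s_\pi-\rho^s_{\pi_E})$. Your discussion of the convexity of $P^s$ and your closing remark on non-injectivity of $\pi\mapsto\rho^s_\pi$ match the paper's own caveats exactly.
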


In the rest of this section, we prove this proposition and then by choosing a specific regularizer, we present our algorithm. At the end we propose a practical implementation of the algorithm. To prove the proposition, we first define another problem, $\overline{\mbox{\emph{RL}}} \circ \overline{\mbox{\emph{IRLfO}}}_\psi (\pi_E)$, and argue that it outputs a state-transition occupancy measure which is the same as $\rho^s_{\tilde{\pi}}$ induced by $\mbox{\emph{RL}} \circ \mbox{\emph{IRLfO}}_\psi (\pi_E)$. We define
\begin{align}\label{alternative-irlfo}
\begin{split}
\overline{\mbox{\emph{IRLfO}}}_\psi(\pi_E) = &\displaystyle{\argmax_{c \in \mathbb{R}^{\mathcal{S} \times \mathcal{S}}}}  (\displaystyle{\min_{\rho_\pi^{s} \in \mathrm{P}^{s}}} \sum_{s,s'} \rho_\pi^{s} (s,s') c(s,s')) \\
&-\sum_{s,s'}  \rho^{s}_{\pi_E} (s,s') c(s,s')-\psi(c)\;,
\end{split}
\end{align}
where, the output is a cost function $\bar{c}$. Note that, $\mathbb{E}_\pi[c(s,s')] = \sum_{s,s'}\rho_\pi^{s} (s,s') c(s,s')$ so (\ref{irlfo}) and (\ref{alternative-irlfo}) are similar except that the former is optimized over $\pi \in \mathrm{\Pi}$ and the latter over $\rho_\pi^{s} \in \mathrm{P}^{s}$. If we consider using an \mbox{\emph{RL}} method to find a state-transition occupancy measure under $\tilde{c}$, (\ref{modified-rl}) can be rewritten as
\begin{align}\label{rl*}
\overline{\mbox{\emph{RL}}}(\bar{c}) = \displaystyle{\min_{\rho_\pi^{s} \in \mathrm{P}^{s}}} \sum_{s,s'} \rho^{s}_\pi (s,s') \bar{c}(s,s')\;,
\end{align}
which would now output the desired state-transition occupancy measure $\bar{\rho}^s_\pi$.
 
\begin{lemma}\label{first}
$\overline{\mbox{RL}} \circ \overline{\mbox{IRLfO}}_\psi (\pi_E)$ outputs a state-transition occupancy measure, $\bar{\rho}_\pi^s$, which is the same as $\rho^s_{\tilde{\pi}}$ induced by $\mbox{RL} \circ \mbox{IRLfO}_\psi (\pi_E)$.
\end{lemma}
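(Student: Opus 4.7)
The plan is to exploit the identity $\mathbb{E}_\pi[c(s,s')] = \sum_{s,s'} \rho_\pi^{s}(s,s')\, c(s,s')$, which the authors have already pointed out, together with the definition $\mathrm{P}^{s} \triangleq \{\rho^{s}_\pi : \pi \in \mathrm{\Pi}\}$. These two facts reduce the lemma to a statement that a minimization over $\pi \in \mathrm{\Pi}$ can be rewritten losslessly as a minimization over $\rho_\pi^{s} \in \mathrm{P}^{s}$, at each of the two stages (inner and outer) of the pipeline.

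First, I would compare (\ref{irlfo}) to (\ref{alternative-irlfo}) at the outer level. For any fixed candidate cost $c \in \mathbb{R}^{\mathcal{S} \times \mathcal{S}}$, the inner minimum over $\pi \in \mathrm{\Pi}$ in (\ref{irlfo}) is
\begin{equation*}
\min_{\pi \in \mathrm{\Pi}} \mathbb{E}_\pi[c(s,s')] = \min_{\pi \in \mathrm{\Pi}} \sum_{s,s'} \rho_\pi^{s}(s,s')\, c(s,s'),
\end{equation*}
and since the map $\pi \mapsto \rho_\pi^{s}$ has image exactly $\mathrm{P}^{s}$, this equals $\min_{\rho_\pi^{s} \in \mathrm{P}^{s}} \sum_{s,s'} \rho_\pi^{s}(s,s')\, c(s,s')$, which is the inner minimum of (\ref{alternative-irlfo}). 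Since the $-\mathbb{E}_{\pi_E}[c]$ and $-\psi(c)$ terms are identical in the two formulations, the outer $\argmax_c$ objectives agree pointwise, so $\mbox{\emph{IRLfO}}_\psi(\pi_E)$ and $\overline{\mbox{\emph{IRLfO}}}_\psi(\pi_E)$ have the same set of maximizers. In particular I can take $\tilde{c} = \bar{c}$.

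Second, I would apply the same rewriting to the downstream \emph{RL} step. The problem $\mbox{\emph{RL}}(\tilde{c})$ from (\ref{modified-rl}) produces $\tilde{\pi} = \argmin_{\pi \in \mathrm{\Pi}} \mathbb{E}_\pi[\tilde{c}(s,s')]$, whose achieved objective value is $\sum_{s,s'} \rho^{s}_{\tilde{\pi}}(s,s')\, \tilde{c}(s,s')$. Since every feasible state-transition occupancy measure in $\mathrm{P}^{s}$ is realized by some policy and vice versa, $\rho^{s}_{\tilde{\pi}}$ is also a minimizer of (\ref{rl*}) with cost $\bar{c} = \tilde{c}$, and conversely any minimizer $\bar{\rho}^s_\pi$ of (\ref{rl*}) is attained by some policy whose expected cost equals that of $\tilde{\pi}$. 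Hence $\overline{\mbox{\emph{RL}}}(\bar{c})$ outputs the same state-transition occupancy measure $\rho^{s}_{\tilde{\pi}}$ (choosing, if necessary, the corresponding minimizer among possibly several optimal ones).

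The only conceptual subtlety — and the place I would be careful — is that the policy-to-occupancy map $\pi \mapsto \rho_\pi^{s}$ need not be injective (two distinct policies can induce the same state-transition occupancy measure through the transition kernel), so one cannot directly recover a unique $\tilde{\pi}$ from $\bar{\rho}^{s}_\pi$. Fortunately, the lemma only asserts equality at the level of state-transition occupancy measures, so surjectivity of $\pi \mapsto \rho_\pi^{s}$ onto $\mathrm{P}^{s}$ — which is immediate from the definition of $\mathrm{P}^{s}$ — is all that is needed. That observation, combined with the two rewriting steps above, closes the proof.
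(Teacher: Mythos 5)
Your proposal is correct and follows essentially the same route as the paper's own proof: both rest on the identity $\mathbb{E}_\pi[c(s,s')] = \sum_{s,s'}\rho_\pi^{s}(s,s')\,c(s,s')$ and the surjectivity of $\pi \mapsto \rho_\pi^{s}$ onto $\mathrm{P}^{s}$, which lets each minimization over $\mathrm{\Pi}$ be replaced by one over $\mathrm{P}^{s}$. You are merely more explicit than the paper in separating the two stages (first showing the returned cost functions $\tilde{c}$ and $\bar{c}$ coincide, then transferring the argument to the downstream \emph{RL} step) and in flagging the tie-breaking among multiple optima, which the paper leaves implicit.
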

\begin{proof}
From the definition of $\mathrm{P}^s$, the mapping from $\mathrm{\Pi}$ to $\mathrm{P}^{s}$ is surjective, i.e., for every $\rho_\pi^{s} \in \mathrm{P}^{s}$, there exists at least one $\pi \in \mathrm{\Pi}$. Therefore, we can say $\bar{\rho}_\pi^s = \rho^s_{\tilde{\pi}}$ (where $\tilde{\pi}$ and $\bar{\rho}_\pi^s$, as already defined, are the outputs of (\ref{modified-rl}) and (\ref{rl*}), and $\rho^s_{\tilde{\pi}}$ is the state-transition occupancy measure that corresponds to $\tilde{\pi}$). Therefore, solving $\mbox{\emph{RL}} \circ \mbox{\emph{IRLfO}}_\psi (\pi_E)$ results in the same $\rho^s_\pi$ as applying $\overline{\mbox{\emph{RL}}}$ using the cost function returned by $\overline{\mbox{\emph{IRLfO}}}$ in (\ref{alternative-irlfo}).
\end{proof}
Note that, in this lemma, the returned policies from these two problems are not necessarily the same. The reason is that the mapping from $\mathrm{\Pi}$ to $\mathrm{P}^{s}$ is not injective, i.e., there could be one or multiple $\pi \in \mathrm{\Pi}$ that corresponds to the same $\rho_\pi^{s} \in \mathrm{P}^{s}$. Consequently, it is not necessarily the case that a policy that gives rise to $\bar{\rho}_\pi$ is the same as $\tilde{\pi}$. However, as we discussed in the previous section, in imitation from observation, we are primarily concerned with the effect of the policy on the environment so this situation is acceptable.

Now we introduce another lemma that helps us in the proof of Proposition \ref{main}.

\begin{lemma}\label{dual}
$\overline{\mbox{RL}} \circ \overline{\mbox{IRLfO}}_\psi (\pi_E) = \argmin_{\rho_\pi^{s} \in \mathrm{P}^{s}} \psi^*(\rho_\pi^{s} - \rho^{s}_{\pi_E})$
\end{lemma}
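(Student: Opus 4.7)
The plan is to reveal both quantities in the lemma as the two natural optima of a common saddle-point problem. Define the bilinear-plus-regularizer Lagrangian
\[
L(\rho, c) \;=\; \sum_{s,s'}\bigl(\rho(s,s') - \rho^s_{\pi_E}(s,s')\bigr)\,c(s,s') \;-\; \psi(c),
\]
with $\rho \in \mathrm{P}^s$ and $c \in \mathbb{R}^{\mathcal{S}\times\mathcal{S}}$. Since neither $-\psi(c)$ nor $-\sum_{s,s'}\rho^s_{\pi_E}(s,s')\,c(s,s')$ depends on $\rho$, they can be moved past the inner $\min_\rho$ without effect. This rewrites (\ref{alternative-irlfo}) as $\bar{c} = \argmax_{c}\min_{\rho \in \mathrm{P}^s} L(\rho, c)$, and by the same observation, (\ref{rl*}) as $\overline{\mbox{\emph{RL}}}(\bar{c}) = \argmin_{\rho \in \mathrm{P}^s} L(\rho, \bar{c})$. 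On the right-hand side of the claimed equality, the definition of the convex conjugate gives $\psi^*(\rho - \rho^s_{\pi_E}) = \sup_c L(\rho, c)$, hence $\argmin_{\rho \in \mathrm{P}^s} \psi^*(\rho - \rho^s_{\pi_E}) = \argmin_\rho \sup_c L(\rho, c)$.

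The lemma therefore asserts that the $\rho$ arising from ``$\max$ over $c$, then $\min$ over $\rho$ at the resulting $\bar{c}$'' coincides with the $\rho$ arising from ``$\min$ over $\rho$ of $\sup_c L$.'' I would derive this from a Sion-type minimax argument applied to $L$. The needed structural facts are: (i) $\mathrm{P}^s$ is convex, since it is the image of the convex policy set $\mathrm{\Pi}$ under the affine Bellman-flow map that defines occupancy measures; (ii) $L(\cdot, c)$ is affine in $\rho$ for every fixed $c$; and (iii) $L(\rho, \cdot)$ is concave in $c$ for every fixed $\rho$ because $\psi$ is a convex regularizer. Together with standard closedness/compactness assumptions on $\mathrm{P}^s$ (in the finite-MDP case) or a coercivity assumption on $\psi$ (in the general case), Sion's theorem delivers $\min_\rho \sup_c L(\rho, c) = \sup_c \min_\rho L(\rho, c)$ and guarantees a saddle point $(\rho^\star, c^\star)$.

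The lemma then falls out of the saddle-point characterization. By definition, $c^\star \in \argmax_c L(\rho^\star, c)$ and $\rho^\star \in \argmin_\rho L(\rho, c^\star)$. The first inclusion, combined with the outer maximization defining $\bar{c}$, identifies $c^\star$ with $\bar{c}$; the second then gives $\rho^\star \in \argmin_\rho L(\rho, \bar{c}) = \overline{\mbox{\emph{RL}}}(\bar{c})$. Simultaneously, by the equality of the two minimax values, $\rho^\star$ realizes the outer minimum in $\min_\rho \sup_c L(\rho, c)$, so $\rho^\star \in \argmin_{\rho}\psi^*(\rho - \rho^s_{\pi_E})$. Matching these two descriptions of $\rho^\star$ yields the desired equality of argmins.

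I expect the main technical obstacle to lie in verifying the hypotheses of the minimax theorem rigorously: the cost variable ranges over the unbounded space $\mathbb{R}^{\mathcal{S}\times\mathcal{S}}$ and $\psi$ is allowed to take the value $+\infty$, so ensuring that the inner supremum is attained and that the min and sup may be swapped will likely require a coercivity or effective-domain assumption on $\psi$ (or an appeal to the finite-dimensional setting). A secondary subtlety, inherited from Lemma \ref{first}, is that the argmins may be set-valued; the equality in the lemma should accordingly be read as an equality of sets of state-transition occupancy measures, which is the occupancy-centric notion of ``same answer'' already adopted by the framework.
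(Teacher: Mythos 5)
Your proposal takes essentially the same route as the paper: the appendix proof cited in the text invokes the minimax principle on exactly this Lagrangian $L(\rho,c)=\sum_{s,s'}\bigl(\rho(s,s')-\rho^{s}_{\pi_E}(s,s')\bigr)c(s,s')-\psi(c)$, swaps the $\min$ and $\max$ using convexity--concavity together with convexity and compactness of $\mathrm{P}^{s}$, identifies $\psi^*(\rho-\rho^{s}_{\pi_E})$ with $\sup_c L(\rho,c)$, and reads the equality of argmins off the resulting saddle point, just as you do. One small correction to your justification of hypothesis (i): the map $\pi\mapsto\rho^{s}_{\pi}$ is not affine, so convexity of $\mathrm{P}^{s}$ does not follow from it being the image of $\mathrm{\Pi}$; rather, $\mathrm{P}^{s}$ is convex and compact because it is the linear image (under $\rho(s,a)\mapsto\sum_a P(s'|s,a)\rho(s,a)$) of the state--action occupancy polytope cut out by the affine Bellman-flow constraints.
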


This lemma is proven in the appendix \footnote{The appendix is anonymously presented https://tinyurl.com/ybkn8v7n
https://tinyurl.com/ybkn8v7n
} using the minimax principle \cite{millar1983minimax}. Thus far, by combining Lemmas \ref{first} and \ref{dual}, we can conclude that $\rho^s_{\tilde{\pi}}$ induced by $\mbox{RL} \circ \mbox{IRLfO}_\psi (\pi_E)$ is the same as the output of $\argmin_{\rho_\pi^{s} \in \mathrm{P}^{s}} \psi^*(\rho_\pi^{s} - \rho^{s}_{\pi_E})$. Now, we only need one more step to prove Proposition \ref{main}:

\begin{lemma}\label{second}
$\argmin_{\pi \in \mathrm{\Pi}} \psi^*(\rho_\pi^{s} - \rho^{s}_{\pi_E})$ is a policy that has a state-transition occupancy measure that is the same as the output of
$\argmin_{\rho_\pi^{s} \in \mathrm{P}^{s}} \psi^*(\rho_\pi^{s} - \rho^{s}_{\pi_E})$.
\end{lemma}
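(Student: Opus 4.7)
The plan is to exploit the fact that the objective $\psi^*(\rho_\pi^{s} - \rho^{s}_{\pi_E})$ depends on the policy $\pi$ only through its induced state-transition occupancy measure $\rho_\pi^{s}$. This suggests a reparameterization argument: one should be able to move between the minimization over $\mathrm{\Pi}$ and the minimization over $\mathrm{P}^s$ using the surjection from $\mathrm{\Pi}$ onto $\mathrm{P}^s$ that was already exploited in the proof of Lemma \ref{first}. Concretely, I would introduce the shorthand $F(\rho) := \psi^*(\rho - \rho_{\pi_E}^{s})$, so that the left-hand problem is $\min_{\pi \in \mathrm{\Pi}} F(\rho_\pi^{s})$ and the right-hand problem is $\min_{\rho_\pi^{s} \in \mathrm{P}^s} F(\rho_\pi^{s})$.

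The two optimization problems then have the same optimal value: by the definition $\mathrm{P}^s \triangleq \{\rho_\pi^{s} : \pi \in \mathrm{\Pi}\}$, every feasible point on the right is of the form $\rho_\pi^{s}$ for some $\pi \in \mathrm{\Pi}$, and every $\pi \in \mathrm{\Pi}$ gives rise to a feasible $\rho_\pi^{s} \in \mathrm{P}^s$ on the right. Hence if $\pi^\star$ attains the minimum on the left, then $F(\rho_{\pi^\star}^{s}) = \min_{\rho_\pi^{s} \in \mathrm{P}^s} F(\rho_\pi^{s})$, so $\rho_{\pi^\star}^{s}$ attains the minimum on the right. Chained with Lemmas \ref{first} and \ref{dual}, this delivers the desired chain of equalities: the state-transition occupancy measure of any policy returned by $\argmin_{\pi \in \mathrm{\Pi}} \psi^*(\rho_\pi^{s} - \rho_{\pi_E}^{s})$ coincides with $\bar{\rho}_\pi^{s}$ from $\overline{\mbox{\emph{RL}}} \circ \overline{\mbox{\emph{IRLfO}}}_\psi(\pi_E)$, which in turn equals $\rho^s_{\tilde{\pi}}$ from $\mbox{\emph{RL}} \circ \mbox{\emph{IRLfO}}_\psi(\pi_E)$, completing the proof of Proposition \ref{main}.

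The only subtlety to be careful about, rather than a true obstacle, is that the policy-to-occupancy map is surjective but not injective (as already noted after Lemma \ref{first}); a given $\rho_\pi^{s} \in \mathrm{P}^s$ may correspond to many policies $\pi \in \mathrm{\Pi}$. This does not hurt the argument because the conclusion of the lemma is stated at the level of state-transition occupancy measures, not policies: two minimizers of $\min_{\pi \in \mathrm{\Pi}} F(\rho_\pi^{s})$ may well differ as policies while agreeing on their $\rho^s$, and that common $\rho^s$ is exactly what the right-hand problem returns. Consistent with the paper's stated motivation for imitation from observation, matching the induced state-transition distribution is all that is required.
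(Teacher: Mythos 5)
Your proposal is correct and follows essentially the same route as the paper: the paper proves Lemma \ref{second} by noting it is ``similar to that of Lemma \ref{first},'' whose proof is exactly your surjectivity/reparameterization argument (the objective depends on $\pi$ only through $\rho_\pi^{s}$, and $\mathrm{P}^{s}$ is the image of $\mathrm{\Pi}$ under $\pi \mapsto \rho_\pi^{s}$). Your explicit handling of the non-injectivity caveat matches the remark the paper makes after Lemma \ref{first}, so nothing is missing.
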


The proof of Lemma \ref{second} is similar to that of Lemma \ref{first}. Now based on Lemmas \ref{first}, \ref{dual}, and \ref{second} we can conclude that Proposition \ref{main} holds.

Having proved this proposition, we can solve $\argmin_{\pi \in \mathrm{\Pi}} \psi^*(\rho_\pi^{s} - \rho^{s}_{\pi_E})$ instead of $\mbox{RL} \circ \mbox{IRLfO}_\psi (\pi_E)$. To this end, we consider the generative adversarial regularizer
\begin{align}\label{psi}
\psi_{GA}(c) \triangleq
\begin{cases}
  \mathbb{E}_{\pi_E} [g(c(s,s'))] & \text{if } c<0\\    
  +\infty & \text{otherwise}
\end{cases} & 
\end{align}
where
\begin{align}
g(x) =
\begin{cases}
  -x-\log(1-e^x) & \text{if } x<0\\    
  +\infty & \text{otherwise}
\end{cases} \;
\end{align}
which is a closed, proper, convex function and has convex conjugate
\begin{align}
\label{psi*}
\begin{split}
\psi_{GA}^*(\rho_\pi^{s} - \rho^{s}_{\pi_E}) = &\displaystyle\max_{D \in (0,1)^{\mathcal{S} \times\mathcal{S}}} \sum_{s,s} \rho_\pi^{s}(s,s')\log(D(s,s'))+ \\
&\rho^{s}_{\pi_E}(s,s')\log(1-D(s,s')) \; ,
\end{split}
\end{align}
where $D:\mathcal{S} \times \mathcal{S} \rightarrow (0,1)$ is a discriminative classifier.
A similar convex conjugate is derived in \citeauthor{ho2016generative}; However, for the sake of completeness, we prove the properties claimed for (\ref{psi}) and show that (\ref{psi*}) is its convex conjugate in the appendix. \footnote{This proof closely follows the proofs of Proposition A.1. and Corollary A.1.1. of \citeauthor{ho2016generative} and it is included here for the sake of completeness. The only substantive difference is that in our case we consider state-transition occupancy measure $(s,s')$ instead of $(s,a)$.}

\begin{algorithm}[t!]
	\caption{\mbox{\emph{GAIfO}}}\label{alg:GAIfO}
	\begin{algorithmic}[1]
		\STATE Initialize parametric policy $\pi_\phi$ with random $\phi$ 
		\STATE Initialize parametric discriminator $D_\theta$ with random $\theta$
		\STATE Obtain state-only expert demonstration trajectories $\tau_E=\{(s,s')\}$
		\WHILE {Policy Improves}
			\STATE Execute $\pi_\phi$ and store the resulting state transitions $\tau=\{(s,s')\}$
			\STATE Update $D_\theta$ using loss $$-\Big(\mathbb{E}_\tau [\log(D_\theta(s,s'))]+\mathbb{E}_{\tau_E} [\log(1-D_\theta(s,s'))]\Big)$$
			\STATE Update $\pi_\phi$ by performing \mbox{\emph{TRPO}} updates with reward function $$-\Big(\mathbb{E}_{\tau} [\log(D_\theta(s,s'))]\Big)$$
		\ENDWHILE
	\end{algorithmic}
\end{algorithm}

\tikzset{%
  state neuron/.style={
    circle,
    draw,
    fill=blue,
    minimum size=.1cm
  },
  next state neuron/.style={
    circle,
    draw,
    fill={rgb:blue,1;white,1},
    minimum size=.1cm
  },
  policy neuron/.style={
    circle,
    draw,
    fill=green,
    minimum size=.1cm
  },
  discriminator neuron/.style={
    circle,
    draw,
    fill=yellow,
    minimum size=.1cm
  },
  action neuron/.style={
    circle,
    draw,
    fill=red,
    minimum size=.1cm
  },
  value neuron/.style={
    circle,
    draw,
    fill=brown,
    minimum size=.1cm
  },
  neuron missing/.style={
    fill=none,
    draw=none, 
    scale=1,
    execute at begin node=\color{black}$\hdots$
  },
}

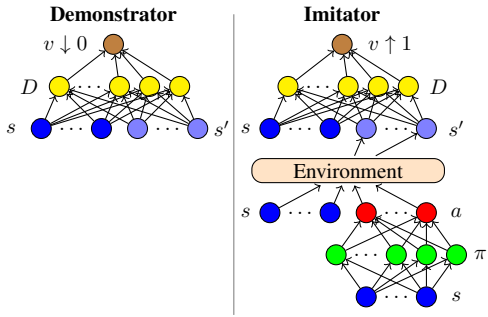
\begin{figure}
\centering
\begin{tikzpicture}[scale=0.8, every node/.style={transform shape}]
\draw[gray] (2.9,-1) -- (2.9, 4);

\node [align=center] at (.9,4) {\textbf{Demonstrator}};
\node [align=center] at (4.7,4) {\textbf{Imitator}};

\draw [fill={rgb:yellow,1;red,1;white,7}, rounded corners](3.2,1.2) rectangle (6.4,1.6) node [pos=0.5, align=center] (environment) {Environment};

\foreach \m/\l [count=\x] in {1,missing,2}
  \node [state neuron/.try, neuron \m/.try ] (state1-\m) at (6.6-\x*.5,-.7) {};
\node [align=center] at (6.6,-.7) {$s$};

\foreach \m [count=\x] in {1,2,3,missing,4}
  \node [policy neuron/.try, neuron \m/.try ] (policy-\m) at (7.1-\x*0.5,0) {};
\node [align=center] at (7,0) {$\pi$};

\foreach \m [count=\x] in {1,missing,2}
  \node [action neuron/.try, neuron \m/.try ] (action1-\m) at (6.6-\x*0.5,.7) {};
\node [align=center] at (6.6,.7) {$a$};
  
\foreach \m/\l [count=\x] in {1,missing,2}
  \node [state neuron/.try, neuron \m/.try] (state2-\m) at (5-\x*.5,.7) {};
\node [align=center] at (3.1,.7) {$s$};
  
\foreach \m/\l [count=\x] in {1,missing,2}
  \node [state neuron/.try, neuron \m/.try] (state3-\m) at (5-\x*.5,2.1) {};
\node [align=center] at (3.1,2.1) {$s$};
  
\foreach \m/\l [count=\x] in {1,missing,2}
  \node [next state neuron/.try, neuron \m/.try] (next_state3-\m) at (6.6-\x*.5,2.1) {};
\node [align=center] at (6.6,2.1) {$s'$};

\foreach \m [count=\x] in {1,2,3,missing,4}
  \node [discriminator neuron/.try, neuron \m/.try ] (discriminator3-\m) at (6.3-\x*0.5,2.8) {};
\node [align=center] at (6.3,2.8) {$D$};

\foreach \m [count=\x] in {1}
  \node [value neuron/.try, neuron \m/.try ] (value3-\m) at (5.2-\x*0.5,3.5) {};
  \node [align=center] at (5.5,3.5) {$v\uparrow 1$};
  
\foreach \m/\l [count=\x] in {1,missing,2}
  \node [state neuron/.try, neuron \m/.try] (state4-\m) at (1.2-\x*.5,2.1) {};
\node [align=center] at (-.8,2.1) {$s$};
  
\foreach \m/\l [count=\x] in {1,missing,2}
  \node [next state neuron/.try, neuron \m/.try] (next_state4-\m) at (2.8-\x*.5,2.1) {};
\node [align=center] at (2.7,2.1) {$s'$};

\foreach \m [count=\x] in {1,2,3,missing,4}
  \node [discriminator neuron/.try, neuron \m/.try ] (discriminator4-\m) at (2.5-\x*0.5,2.8) {};
\node [align=center] at (-.5,2.8) {$D$};

\foreach \m [count=\x] in {1}
  \node [value neuron/.try, neuron \m/.try ] (value4-\m) at (1.4-\x*0.5,3.5) {};  
  \node [align=center] at (0.1,3.5) {$v\downarrow 0$};

\foreach \i in {1,...,2}
  \foreach \j in {1,...,4}
    \draw [->] (state1-\i) -- (policy-\j);
\foreach \i in {1,...,4}
  \foreach \j in {1,...,2}
    \draw [->] (policy-\i) -- (action1-\j);
\foreach \i in {1,...,2}
  \foreach \j in {1,...,4}
    \draw [->] (state3-\i) -- (discriminator3-\j);
\foreach \i in {1,...,2}
  \foreach \j in {1,...,4}
    \draw [->] (next_state3-\i) -- (discriminator3-\j);
\foreach \i in {1,...,4}
  \foreach \j in {1}
    \draw [->] (discriminator3-\i) -- (value3-\j);
\foreach \i in {1,...,2}
  \foreach \j in {1,...,4}
    \draw [->] (state4-\i) -- (discriminator4-\j);
\foreach \i in {1,...,2}
  \foreach \j in {1,...,4}
    \draw [->] (next_state4-\i) -- (discriminator4-\j);
\foreach \i in {1,...,4}
  \foreach \j in {1}
    \draw [->] (discriminator4-\i) -- (value4-\j);
%

%
\foreach \i in {1,...,2}
  \draw [->] (action1-\i) -- (environment);
\foreach \i in {1,...,2}
  \draw [->] (state2-\i) -- (environment);
\foreach \i in {1,...,2}
  \draw [->] (environment) -- (next_state3-\i);

\end{tikzpicture}
\caption{A diagrammatic representation of \mbox{\emph{GAIfO}}. On the left, $s$ (dark blue) and $s'$ (light blue) are the state and next-state features in a demonstration transition, respectively. On the right, dark blue neurons represent the imitator's states. Based on policy $\pi$ (green), it performs action $a$ (red) in the environment, and encounters the next-state $s'$ (light blue). We aim to find a policy that generates state-transitions close to the demonstrations. To this end, we iteratively train the discriminator and the policy. The discriminator is trained in a way to output values $v$ (brown) close to zero for the data coming from the expert (left) and close to one for the data coming from the imitator (right). The policy is trained to generate state-transitions close to the demonstrations so that the discriminator is not able to distinguish them from the demonstrations.}
\label{fig:framework}
\end{figure}

Using the above, the imitation from observation problem can be solved as:
\begin{equation}\label{eqn:gaifoprogram}
\begin{split}
\displaystyle{\min_{\pi \in \mathrm{\Pi}}} \ \psi_{GA}^*(\rho^{s}_\pi - \rho^{s}_{\pi_E}) = &\displaystyle{\min_{\pi \in \mathrm{\Pi}}}\displaystyle\max_{D \in (0,1)^{\mathcal{S} \times\mathcal{S}}} \mathbb{E}_\pi [\log(D(s,s'))]+ \\
& \mathbb{E}_{\pi_E} [\log(1-D(s,s'))]
\end{split}
\end{equation}
We can see that the loss function in (\ref{eqn:gaifoprogram}) is similar to the generative adversarial loss.
We can connect this to general \mbox{\emph{GAN}}s if we interpret the expert's demonstrations as the real data, and the data coming from the imitator as the generated data.
The discriminator seeks to distinguish the source of the data, and the imitator policy (i.e., the generator) seeks to fool the discriminator to make it look like the state transitions it generates are coming from the expert.
The entire process can be interpreted as bringing the distribution of the imitator's state transitions closer to that of the expert. We call this process Generative Adversarial Imitation from Observation (\mbox{\emph{GAIfO}}).

\section{Practical Implementation}
\label{sec:gaifo:impl}
Based on the preceding analysis, we now specify our practical implementation of the \mbox{\emph{GAIfO}} algorithm.
We represent the discriminator, $D$, using a multi-layer perceptron with parameters $\theta$ that takes as input a state transition and outputs a value between $0$ and $1$.
We represent the policy, $\pi$, using a multi-layer perceptron with parameters $\phi$ that takes as input a state and outputs an action.
We begin by randomly initializing each of these networks, after which the imitator selects an action according to $\pi_\phi$ and executes that action.
This action leads to a new state, and we feed both this state transition and the entire set of expert state transitions to the discriminator.
The discriminator is updated using the Adam optimization algorithm \cite{kingma2014adam}, with cross-entropy loss that seeks to push the output for expert state transitions closer to $0$ and the imitator's state transitions closer to $1$.
After the discriminator update, we perform trust region policy optimization (\mbox{\emph{TRPO}}) \cite{schulman2015trust} to improve the policy using a reward function that encourages state transitions that yield small outputs from the discriminator (i.e., those that appear to be from the demonstrator).
This process continues until convergence.
The algorithm is shown in Algorithm \ref{alg:GAIfO} and the framework is summarized in Figure \ref{fig:framework}.

The implementation described above is only effective for cases in which the demonstration consists of low-dimensional state representations.
In particular, the imitation policy maps a single state to the imitating action and the reward function operates on a single state transition.
This approach is feasible for cases in which \emph{(a)} the states can be assumed to be fully-observable, and \emph{(b)} the system is strictly Markovian.
However, when considering visual state representations, neither of these assumptions is necessarily valid.
Therefore, agents operating in such state spaces are typically provided instead a recent state history.
This is useful because, for example, having knowledge about the velocity of the agent at each time step is important in order to select the correct action, and velocity information is not available when considering a single image.
Therefore, we propose here a second implementation of \mbox{\emph{GAIfO}} that enables imitation from visual demonstration data.
It modifies the implementation used for low-dimensional state representations by adding convolutional layers and using images from multiple time steps as the input to the generator and discriminator. This implementation is summarized in Figure \ref{fig:C-GAIfO}.

\begin{figure*}[!ht]
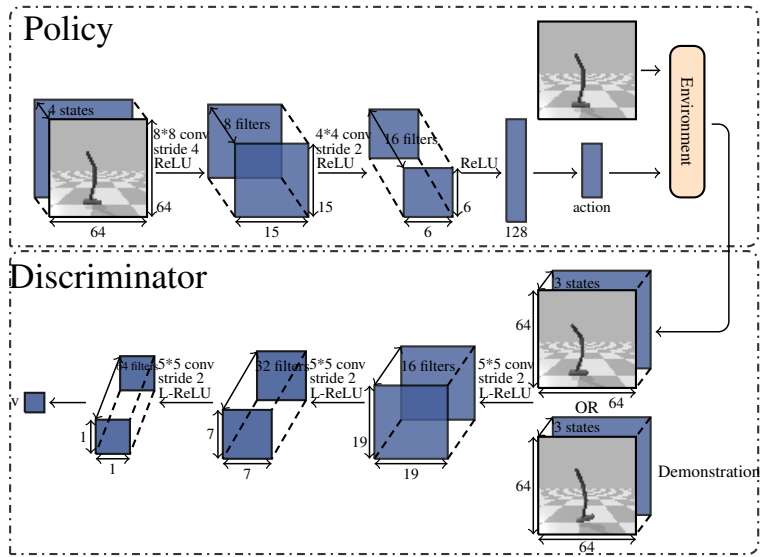

\centering
\begin{tikzpicture}[thick,scale=0.65, every node/.style={scale=0.65}]
	\draw[use as bounding box, transparent] (-1.8,-6.9) rectangle (13, 4.3);
	\draw[opacity=.8,draw=black, rounded corners, dashdotted] (-1.8,-.6) -- (-1.8,4.3) -- (13.5,4.3) -- (13.5,-.6) -- (-1.8,-.6);
	\node [align=center] at (-.6, 3.8) {\huge Policy};
	\draw[opacity=.8,draw=black, rounded corners, dashdotted] (-1.8,-6.9) -- (-1.8,-.7) -- (13.5,-.7) -- (13.5,-6.9) -- (-1.8,-6.9);
	\node [align=center] at (0.2, -1.2) {\huge Discriminator};
	\draw[fill=light-blue,opacity=.8,draw=black] (-1.3,.4) -- (-1.3,2.4) -- (.7,2.4) -- (.7,.4) -- (-1.3,.4);
	\node[] (input image) at (0,1) {\includegraphics[height=20mm]{c_state.png}};
	\draw[draw=black] (-1,0) -- (-1,2) -- (1,2) -- (1,0) -- (-1,0);
	\draw[<->, line width=0.2mm] (-1,2) -- (-1.25,2.35);
	\draw[densely dashed] (1,2) -- (.7,2.4);
	\draw[densely dashed] (-1.3,.4) -- (-1,0);
	\node [align=center] at (-.55, 2.2) {\small 4 states};
	
	\draw[<->, line width=0.2mm] (-1,-.1) -- (1,-.1);
	\node [align=center] at (0, -.3) {\small 64};
	\draw[<->, line width=0.2mm] (1.1,0) -- (1.1,2);
	\node [align=left] at (1.35, .2) {\small 64};

	\draw[->, line width=0.2mm] (1.18,.9) -- (2.18,.9);
	\node [align=left] at (1.7, 1.7) {\small 8*8 conv};
	\node [align=left] at (1.6, 1.4) {\small stride 4};
	\node [align=left] at (1.53, 1.1) {\small ReLU};

	\draw[fill=light-blue,opacity=.8,draw=black] (2.25,.8) -- (2.25,2.3) -- (3.75,2.3) -- (3.75,.8) -- (2.25,.8);
	\draw[fill=light-blue,opacity=.8,draw=black] (2.8,0) -- (2.8,1.5) -- (4.3,1.5) -- (4.3,0) -- (2.8,0);
	\draw[<->, line width=0.2mm] (2.8,1.5) -- (2.30,2.25);
	\draw[densely dashed] (3.75,2.3) -- (4.3,1.5);
	\draw[densely dashed] (3.75,.8) -- (4.3,0);
	\draw[densely dashed] (2.25,.8) -- (2.8,0);
	\node [align=center] at (3.05, 1.9) {\small 8 filters};
	
	\draw[<->, line width=0.2mm] (2.8,-.1) -- (4.3,-.1);
	\node [align=center] at (3.55, -.3) {\small 15};
	\draw[<->, line width=0.2mm] (4.4,0) -- (4.4,1.5);
	\node [align=left] at (4.65, .2) {\small 15};

	\draw[->, line width=0.2mm] (4.5,.9) -- (5.5,.9);
	\node [align=left] at (5.03, 1.7) {\small 4*4 conv};
	\node [align=left] at (4.93, 1.4) {\small stride 2};
	\node [align=left] at (4.86, 1.1) {\small ReLU};

	\draw[fill=light-blue,opacity=.8,draw=black] (5.55,1.2) -- (5.55,2.2) -- (6.55,2.2) -- (6.55,1.2) -- (5.55,1.2);
	\draw[fill=light-blue,opacity=.8,draw=black] (6.25,0) -- (6.25,1) -- (7.25,1) -- (7.25,0) -- (6.25,0);
	\draw[<->, line width=0.2mm] (6.25,1) -- (5.55,2.2);
	\draw[densely dashed] (6.25,0) -- (5.55,1.2);
	\draw[densely dashed] (6.55,2.2) -- (7.25,1);
	\draw[densely dashed] (6.55,1.2) -- (7.25,0);
	\node [align=center] at (6.4,1.6) {\small 16 filters};
	
	\draw[<->, line width=0.2mm] (6.25,-.1) -- (7.25,-.1);
	\node [align=center] at (6.75, -.3) {\small 6};
	\draw[<->, line width=0.2mm] (7.35,0) -- (7.35,1);
	\node [align=left] at (7.55, .2) {\small 6};

	\draw[->, line width=0.2mm] (7.45,.9) -- (8.25,.9);
	\node [align=left] at (7.8, 1.1) {\small ReLU};

	\draw[fill=light-blue,opacity=.8,draw=black] (8.35,-.1) -- (8.35,2) -- (8.75,2) -- (8.75,-.1) -- (8.35,-.1);
	\node [align=center] at (8.55, -.3) {\small 128};

	\draw[->, line width=0.2mm] (8.9,.9) -- (9.8,.9);

	\draw[fill=light-blue,opacity=.8,draw=black] (9.9,.4) -- (9.9,1.5) -- (10.3,1.5) -- (10.3,.4) -- (9.9,.4);
	\node [align=center] at (10.1, .2) {\small action};
	\draw[->, line width=0.2mm] (10.45,.9) -- (11.5,.9);

	\node[] (input image) at (10,3) {\includegraphics[height=20mm]{c_state.png}};
	\draw[draw=black] (9,2) -- (9,4) -- (11,4) -- (11,2) -- (9,2.);
	\draw[->, line width=0.2mm] (11.1,3) -- (11.5,3);
	
	\draw [fill={rgb:yellow,1;red,1;white,7}, rounded corners](11.7,.4) rectangle (12.4,3.5) node [pos=0.5, align=center,rotate=-90] (environment) {Environment};
	\draw[->, line width=0.2mm,draw=black, rounded corners] (12.55,1.9) -- (12.9,1.9) -- (12.9,-2.35) -- (11.4,-2.35);

	\draw[fill=light-blue,opacity=.8,draw=black] (9.3,-3.1) -- (9.3,-1.1) -- (11.3,-1.1) -- (11.3,-3.1) -- (9.3,-3.1);
	\node[] (input image) at (10,-2.5) {\includegraphics[height=20mm]{n_state.png}};
	\draw[draw=black] (9,-3.5) -- (9,-1.5) -- (11,-1.5) -- (11,-3.5) -- (9,-3.5);
	\draw[<->, line width=0.2mm] (9.3,-1.1) -- (9,-1.5);
	\draw[densely dashed] (11.3,-1.1) -- (11,-1.5);
	\draw[densely dashed] (11.3,-3.1) -- (11,-3.5);
	\node [align=center] at (9.8,-1.35) {\small 3 states};
	
	\draw[<->, line width=0.2mm] (9,-3.6) -- (11,-3.6);
	\node [align=center] at (10.6, -3.8) {\small 64};
	\draw[<->, line width=0.2mm] (8.9,-3.5) -- (8.9,-1.5);
	\node [align=left] at (8.7, -2.2) {\small 64};
	
	\node [align=center] at (10, -3.9) {OR};
	
	\draw[fill=light-blue,opacity=.8,draw=black] (9.3,-6.1) -- (9.3,-4.1) -- (11.3,-4.1) -- (11.3,-6.1) -- (9.3,-6.1);
	\node[] (input image) at (10,-5.5) {\includegraphics[height=20mm]{gs_hopper.png}};
	\draw[draw=black] (9,-6.5) -- (9,-4.5) -- (11,-4.5) -- (11,-6.5) -- (9,-6.5);
	\draw[<->, line width=0.2mm] (9.3,-4.1) -- (9,-4.5);
	\draw[densely dashed] (11.3,-4.1) -- (11,-4.5);
	\draw[densely dashed] (11.3,-6.1) -- (11,-6.5);
	\node [align=center] at (9.8,-4.3) {\small 3 states};
	
	\draw[<->, line width=0.2mm] (9,-6.6) -- (11,-6.6);
	\node [align=center] at (10, -6.75) {\small 64};
	\draw[<->, line width=0.2mm] (8.9,-6.5) -- (8.9,-4.5);
	\node [align=left] at (8.7, -5.5) {\small 64};
	
	\node [align=center] at (12.5, -5.2) {Demonstration};
	
	\draw[->, line width=0.2mm] (8.9,-3.8) -- (7.8,-3.8);
	\node [align=left] at (8.35, -3) {\small 5*5 conv};
	\node [align=left] at (8.25, -3.3) {\small stride 2};
	\node [align=left] at (8.33, -3.6) {\small L-ReLU};

	\draw[fill=light-blue,opacity=.8,draw=black] (6.2,-4.15) -- (6.2,-2.65) -- (7.7,-2.65) -- (7.7,-4.15) -- (6.2,-4.15);
	\draw[fill=light-blue,opacity=.8,draw=black] (5.65,-4.95) -- (5.65,-3.45) -- (7.15,-3.45) -- (7.15,-4.95) -- (5.65,-4.95);
	\draw[<->, line width=0.2mm] (6.2,-2.65) -- (5.65,-3.45);
	\draw[densely dashed] (7.7,-2.65) -- (7.15,-3.45);
	\draw[densely dashed] (7.7,-4.15) -- (7.15,-4.95);
	\draw[densely dashed] (6.2,-4.15) -- (5.65,-4.95);
	\node [align=center] at (6.75, -3.05) {\small 16 filters};
	
	\draw[<->, line width=0.2mm] (5.65,-5.05) -- (7.15,-5.05);
	\node [align=center] at (6.4, -5.25) {\small 19};
	\draw[<->, line width=0.2mm] (5.55,-4.95) -- (5.55,-3.45);
	\node [align=left] at (5.35, -4.6) {\small 19};

	\draw[->, line width=0.2mm] (5.45,-3.8) -- (4.35,-3.8);
	\node [align=left] at (4.9, -3) {\small 5*5 conv};
	\node [align=left] at (4.8, -3.3) {\small stride 2};
	\node [align=left] at (4.88, -3.6) {\small L-ReLU};

	\draw[fill=light-blue,opacity=.9,draw=black] (3.25,-3.75) -- (3.25,-2.75) -- (4.25,-2.75) -- (4.25,-3.75) -- (3.25,-3.75);
	\draw[fill=light-blue,opacity=.9,draw=black] (2.55,-4.95) -- (2.55,-3.95) -- (3.55,-3.95) -- (3.55,-4.95) -- (2.55,-4.95);
	\draw[<->, line width=0.2mm] (3.25,-2.75) -- (2.55,-3.95);
	\draw[densely dashed] (4.25,-2.75) -- (3.55,-3.95);
	\draw[densely dashed] (4.25,-3.75) -- (3.55,-4.95);
	\draw[densely dashed] (3.25,-3.75) -- (2.55,-4.95);
	\node [align=center] at (3.77, -3.05) {\small 32 filters};
	
	\draw[<->, line width=0.2mm] (2.55,-5.05) -- (3.55,-5.05);
	\node [align=center] at (3.05, -5.25) {\small 7};
	\draw[<->, line width=0.2mm] (2.45,-4.95) -- (2.45,-3.95);
	\node [align=left] at (2.30, -4.45) {\small 7};

	\draw[->, line width=0.2mm] (2.35,-3.8) -- (1.25,-3.8);
	\node [align=left] at (1.8, -3) {\small 5*5 conv};
	\node [align=left] at (1.7, -3.3) {\small stride 2};
	\node [align=left] at (1.78, -3.6) {\small L-ReLU};

	\draw[fill=light-blue,opacity=.9,draw=black] (.45,-3.55) -- (.45,-2.85) -- (1.15,-2.85) -- (1.15,-3.55) -- (.45,-3.55);
	\draw[fill=light-blue,opacity=.9,draw=black] (-.05,-4.85) -- (-.05,-4.15) -- (.65,-4.15) -- (.65,-4.85) -- (-.05,-4.85);
	\draw[<->, line width=0.2mm] (.45,-2.85) -- (-.05,-4.15);
	\draw[densely dashed] (1.15,-2.85) -- (.65,-4.15);
	\draw[densely dashed] (1.15,-3.55) -- (.65,-4.85);
	\draw[densely dashed] (.45,-3.55) -- (-.05,-4.85);
	\node [align=center] at (.8, -3.05) {\scriptsize 64 filters};
	
	\draw[<->, line width=0.2mm] (-.05,-4.95) -- (.65,-4.95);
	\node [align=center] at (.3, -5.15) {\small 1};
	\draw[<->, line width=0.2mm] (-.15,-4.85) -- (-.15,-4.15);
	\node [align=left] at (-.3, -4.5) {\small 1};

	\draw[->, line width=0.2mm] (-.25,-3.8) -- (-1,-3.8);
	
	\draw[fill=light-blue,opacity=.9,draw=black] (-1.5,-4) -- (-1.5,-3.6) -- (-1.1,-3.6) -- (-1.1,-4) -- (-1.5,-4);
	\node [align=left] at (-1.7, -3.8) {v};
	
\end{tikzpicture}
\caption{A diagrammatic representation of our \mbox{\emph{GAIfO}} implementation for processing visual state representations. A stack of $4$ grayscale images from $t-3$ to $t$ ($t$ being the current time-step) enters the policy \mbox{\emph{CNN}} (top left). The policy outputs an action that the agent takes in the environment and goes to the next state in time $t+1$ (top right). A stack of $3$ grayscale images from $t-1$ to $t+1$ of the agent is prepared along with a stack of $3$ consecutive state images (grayscale) of the demonstrator (bottom right). When data from the imitation policy is provided, the stack from the imitator enters the discriminator and outputs the reward for taking that action (bottom left). This reward value is then used to both update the policy using \mbox{\emph{TRPO}} and also update the discriminator using supervised learning (to drive the value closer to zero). When data from the demonstrator is provided, the stack from the demonstrator enters the discriminator and outputs a value which is then used to update the discriminator (to drive the value closer to one).}
\label{fig:C-GAIfO}
\end{figure*}

\begin{figure*}[ht]
	\centering
	\includegraphics[scale=.40]{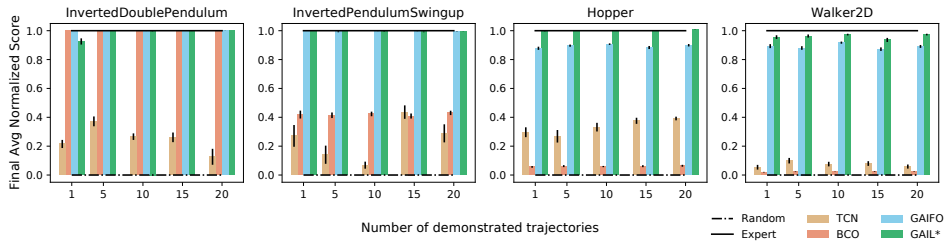}
	\caption{Performance of algorithms in low-dimensional experiments with respect to the number of demonstration trajectories. Rectangular bars and error bars represent mean return and standard deviations, respectively. For comparison purposes, we have scaled all the performances such that a random and the expert policy score $0.0$ and $1.0$, respectively. *\mbox{\emph{GAIL}} has access to action information.}
	\label{fig:performance}
\end{figure*}

\section{Experimental Setup and Implementation Details}\label{sec:experiments}

We evaluate our algorithm in domains from OpenAI Gym \cite{1606.01540} based on the Pybullet simulator \cite{coumans2016pybullet}. 
In each of the domains, we used trust region policy optimization (\mbox{\emph{TRPO}}) \cite{schulman2015trust} to train the expert agents, and we recorded the demonstrations using the resulting policy.

The results shown in the figures are the average over ten independent trials. We compare our algorithm against three baselines:
\begin{itemize}
\item \textbf{Behavioral Cloning from Observation (\mbox{\emph{BCO}})\cite{torabi2018behavioral}:} \mbox{\emph{BCO}} first learns an inverse dynamics model through self-supervised exploration, and then uses that model to infer the missing actions from state-only demonstrated trajectories. \mbox{\emph{BCO}} then uses the inferred actions to learn an imitation policy using conventional behavioral cloning. 

\item \textbf{Time Contrastive Networks (\mbox{\emph{TCN}})\cite{sermanet2017time}:} \mbox{\emph{TCN}}s use a triplet loss to train a neural network to learn an encoded form of the task at each time step. This loss function brings the states that occur in a small time-window closer together in the embedding space and pushes the ones from distant time-steps far apart. A reward function is then defined as the Euclidean distance between the embedded demonstration and the embedded agent's state at each time step. The imitation policy is learned using \mbox{\emph{RL}} techniques that seek to optimize this reward function. 

\item \textbf{Generative Adversarial Imitation Learning (\mbox{\emph{GAIL}}) \cite{ho2016generative}:} This method is as specified in Section \ref{preliminaries}. Note that, this method has access to the demonstrator's actions while the others do not.
\end{itemize}

\section{Results and Discussion}\label{sec:results}
In this section, we present the results of the two sets of experiments described above.

\subsection{Low-dimensional State Representations}
Figure \ref{fig:performance} illustrate the comparative performance of \mbox{\emph{GAIfO}} in our experimental domains using the low-dimensional state representations.
We can see that, for the domains considered here, \mbox{\emph{GAIfO}} {\em (a)} performs very well compared to other \mbox{IfO} techniques, and {\em (b)} is surprisingly comparable to \mbox{\emph{GAIL}} even though \mbox{\emph{GAIfO}} lacks access to explicit action information.	
	
Figure \ref{fig:performance} compares the final performance of the imitation policies learned by different algorithms.
We can clearly see that \mbox{\emph{GAIfO}} outperforms the other imitation from observation algorithms by a large margin in most of the experiments.
For the InvertedDoublePendulum domain, we can see that the \mbox{\emph{TCN}} method does not perform well at all.
We hypothesize that this is the case because TCN relies on time synchronization in order to find the imitating policy, i.e., it learns what the state should be at each time step.
However, successfully performing the InvertedDoublePendulum task requires the agent to simply keep the pendulum upright, and requiring it to time synchronize with the demonstrator may be too restrictive a requirement.
\mbox{\emph{BCO}}, on the other hand, performs very well in this domain, which demonstrates that, here, the inverse dynamics model learned by \mbox{\emph{BCO}} is accurate and that the compounding error problem is negligible.
We can see that \mbox{\emph{GAIfO}} also performs very well here, achieving performance similar to that of the expert, which shows that the algorithm has been able to extract the goal of the task and find a reasonable cost function from which to learn the policy.


	\begin{figure*}[ht]
		\centering
		\includegraphics[scale=.40]{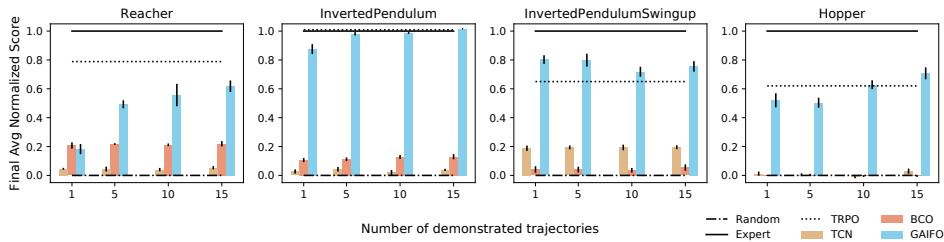}
		\caption{Performance of algorithms in visual experiments with respect to the number of demonstration trajectories. Rectangular bars and error bars represent mean return and standard deviations, respectively. For comparison purposes, we have scaled all the performances such that a random and the expert policy score $0.0$ and $1.0$, respectively.}
		\label{fig:performance-cgaifo}
	\end{figure*}

For the InvertedPendulumSwingup domain, we can see that \mbox{\emph{TCN}} again does not perform well, perhaps because the goal of the task is not well-represented in the encoding-learning phase.
\mbox{\emph{BCO}} also does not perform well.
We hypothesize that this is the case because of the compounding error problem since performing this task successfully is contingent on taking several specific actions consecutively -- deviation from those actions would cause the pendulum to drop down and not reach the goal.
\mbox{\emph{GAIfO}} and \mbox{\emph{GAIL}}, on the other hand, perform as well as the expert, which reveals that these algorithms have successfully extracted the goal and learned the task.

For both the Hopper and Walker2D domains, it can be seen that, again, \mbox{\emph{TCN}} does not work well.
We posit that this might be due to the fact that these tasks require behavior that is cyclic in nature, i.e., the expert demonstrations contain repeated states.
Because \mbox{\emph{TCN}} learns a time-dependent representation of the task, it cannot appropriately handle this periodicity and, therefore, the learned representations are not sufficient.
\mbox{\emph{GAIfO}}, however, learns a distribution of the state transitions that is not time-dependent; therefore, periodicity does not affect its performance.
\mbox{\emph{BCO}} also does not perform well in either of these two domains, perhaps again due to the compounding error problem.
Learning in these domains has two steps: first, the agent needs to learn to stand, and then the agent needs to learn to walk or hop.
With \mbox{\emph{BCO}}, it would seem that the imitating agent begins to deviate from the expert early in the task, and this early deviation ultimately leads to the imitating agent being unable to learn the secondary walking and hopping behaviors.
\mbox{\emph{GAIfO}}, on the other hand, does not suffer from this issue because it learns by executing its own policy in the environment (on-policy learning) and is therefore able to address deviation from the expert during the learning process.


\subsection{Visual State Representations}

In this section, we discuss the results of the experiments performed on the cases where the states are represented using the raw visual data.
Figure \ref{fig:performance-cgaifo} illustrates the comparison between the performance of \mbox{\emph{GAIfO}}, \mbox{\emph{BCO}} and \mbox{\emph{TCN}}.
\footnote{Here, we do not compare against \mbox{\emph{GAIL}} because doing so would require a drastic change to the structure of its discriminator in order to process raw visual data, i.e., the discriminator would need to be altered to appropriately mix action and visual data.}
In these experiments, like the ones done using the lower-dimensional state representations, the expert is trained with \mbox{\emph{TRPO}} using low-level state features, and the quantities $0$ and $1$ represent the performance of a random agent and the expert, respectively.
The demonstrations, though, consist of visual recordings using the trained policy.
Accordingly, for a more-representative baseline, we also learn a policy with \mbox{\emph{TRPO}} using visual states only (as opposed to the low-dimensional state observations) and represent the performance of that agent using a black dotted line on the plots.
This line is important in our comparison because it shows (everything being similar to \mbox{\emph{IfO}} methods) what would have been the resulting performance if the agent had access to the reward.
Figure \ref{fig:performance-cgaifo} shows that \mbox{\emph{GAIfO}} outperforms other approaches by a large margin.

It is interesting to notice that, even though \mbox{\emph{GAIfO}} (like the other IfO techniques) does not achieve the performance of the expert agent (solid line), it \emph{does} achieve the performance of the TRPO-trained agent that used visual state representations.
This suggests that, in these cases, the drop in imitation performance is perhaps due to a fundamental limitation of learning the task from visual data (i.e., partial state observability).

Finally, it can be seen that \mbox{\emph{BCO}} does not perform well in any of the domains, perhaps due to {\em (a)} the complexity of learning dynamics models over visual states, and {\em (b)} compounding error. \mbox{\emph{TCN}} also does not work well, perhaps due to the demonstrations not being time-synchronized.

\section{Conclusion and Future Work}
In this paper, we presented a general framework for imitation from observation ($\mbox{\emph{RL}}\circ\mbox{\emph{IRLfO}}_\psi(\pi_E)$) and then proposed a specific algorithm (\mbox{\emph{GAIfO}}) for doing so. 
\mbox{\emph{GAIfO}} removes the need for several restrictive assumptions that are required for some other \mbox{\emph{IfO}} techniques, including the need for multiple demonstrations to be time-synchronized.
Moreover, the on-policy nature of \mbox{\emph{GAIfO}} allows it to avoid the compounding error problem experienced by more brittle imitation techniques.
The result is an approach that is able to find better imitation policies without the need for action information, and is also able to find imitation policies that perform very close to those found by techniques that do have access to this information.

Regarding future work, note that, in our analysis, we did not consider policy entropy terms in either the \mbox{\emph{IRLfO}} step, nor in the \mbox{\emph{RL}} step. Therefore, it would be interesting to include entropy in these equations -- as has been shown to be beneficial in some cases \cite{haarnoja2017reinforcement,haarnoja2018soft} -- and investigate its effects on the overall problem and results as has been shown to be beneficial in some cases.


\section*{Acknowledgements}
This work has taken place in the Learning Agents Research
Group (LARG) at the Artificial Intelligence Laboratory, The University
of Texas at Austin.  LARG research is supported in part by grants from
the National Science Foundation (IIS-1637736, IIS-1651089,
IIS-1724157), the Office of Naval Research (N00014-18-2243), Future of
Life Institute (RFP2-000), Army Research Lab, DARPA, Intel, Raytheon,
and Lockheed Martin.  Peter Stone serves on the Board of Directors of
Cogitai, Inc.  The terms of this arrangement have been reviewed and
approved by the University of Texas at Austin in accordance with its
policy on objectivity in research.

\bibliography{example_paper}
\bibliographystyle{icml2019}

\end{document}